\newtheorem{definition}{Definition}
\newtheorem{assumption}{Assumption}
\newtheorem{theorem}{Theorem}
\newtheorem{lemma}{Lemma}
\newtheorem{proposition}{Proposition}
\def\BState{\State\hskip-\ALG@thistlm}
\newcommand{\SubItem}[1]{
    {\setlength\itemindent{15pt} \item[-] #1}
}
\title{HOPE: Human-Centric Off-Policy Evaluation for \\ E-Learning and Healthcare}
\author{Ge Gao\footnotetext{Contact: \{ggao5, mchi\}@ncsu.edu}}
\affiliation{
  \institution{North Carolina State University}
  \city{Raleigh}
  \country{USA}}
\author{Song Ju}
\affiliation{
  \institution{North Carolina State University}
  \city{Raleigh}
  \country{USA}}
\author{Markel Sanz Ausin}
\affiliation{
  \institution{North Carolina State University}
  \city{Raleigh}
  \country{USA}}
\author{Min Chi}
\affiliation{
  \institution{North Carolina State University}
  \city{Raleigh}
  \country{USA}}
\begin{abstract}
Reinforcement learning (RL) has been extensively researched for enhancing human-environment interactions in various human-centric tasks, including e-learning  and health care. Since deploying and evaluating policies online are high-stakes in such tasks, off-policy evaluation (OPE) is crucial for inducing effective policies. In human-centric environments, however, OPE is challenging because the underlying state is often unobservable, while only aggregate rewards can be observed (students' test scores or whether a patient is released from the hospital eventually). In this work, we propose a \emph{human-centric OPE (HOPE)} to handle partial observability and aggregated rewards in such environments. Specifically, we reconstruct immediate rewards from the aggregated rewards considering partial observability to estimate expected total returns. We provide a theoretical bound for the proposed method, and we have conducted extensive experiments in real-world human-centric tasks, including sepsis treatments and an intelligent tutoring system. Our approach reliably predicts the returns of different policies and outperforms state-of-the-art benchmarks using both standard validation methods and human-centric significance tests.
\end{abstract}
\keywords{Off-Policy Evaluation, Offline Reinforcement Learning, Human-Centric Environments, E-Learning, Healthcare}
\newcommand{\BibTeX}{\rm B\kern-.05em{\sc i\kern-.025em b}\kern-.08em\TeX}
\begin{document}


\pagestyle{fancy}
\fancyhead{}


\maketitle 


\section{Introduction}

Off-policy evaluation (OPE) sits at the epicenter of offline Reinforcement Learning (RL) research~\cite{doroudi2017importance,nachum2019dualdice,jiang2016doubly,thomas2016data}, which estimates the performance of an RL-induced policy leveraging prior knowledge obtained from historical data. OPE is especially important in human-centric environments where the execution of a bad policy can be costly and dangerous. For example, in healthcare, an overestimated policy could be ineffective and even increase mortality by wrong treatments. There are at least \textbf{two main challenges} for OPE to work with human-centric environments. One is that the underlying state of a human-centric environment is usually \textbf{\emph{unobservable}}~\cite{mandel2014offline,cassandra1998survey}, resulting in a partially observable Markov decision process (POMDP). Specifically, the observations may not contain sufficient information to fully reconstruct the underlying states. For example, the population considered in the environment could change over time, and their characteristics may be varied across different cohorts. In e-learning, student backgrounds can vary from semester to semester; similarly, in healthcare, patient demographics can change in hospitals across different regions~\cite{payne2017changing,khoshnevisan2021unifying}. \textbf{The other challenge} is that  \textbf{\emph{only an aggregated (or delayed) reward can be observed}} after a certain period of time, where all immediate rewards in between are often missing. The most appropriate rewards in e-learning and healthcare are student learning performance and patient outcomes, which are typically unavailable until the entire trajectory is complete. This is due to the complex natures of both learning and disease progression, which make it difficult to assess students' learning or patient health states moment by moment. More importantly, many instructional or medical interventions that boost short-term performance may not be effective over the long term. Different from delayed rewards in classic mouse-in-the-maze situations where agents receive insignificant rewards along the path and a significant reward in the final goal state (the food), in e-learning and healthcare, there are immediate rewards along the way but they are often \emph{unobservable}.
Prior OPE work~\cite{nachum2019dualdice,jiang2016doubly,abe2021off,gao2023variational} have achieved outstanding performance in simulated environments such as Mujoco~\cite{todorov2012mujoco}. However, one may not be able to directly apply such methods toward human-centric environments, since immediate rewards are missing and the environment is partially observable. A possible way is assuming the rewards are sparse that only indicate whether a task is completed partially or fully. Sparse rewards typically correspond to the attainment of some particular tasks such as a robot attaining designated waypoints which provide little feedback for immediate steps~\cite{rengarajan2022reinforcement}. In many human-centric tasks, interaction outcomes can be gradually improved step-by-step and the immediate reward on each step can be meaningful by itself. For example, students' cognitive outcomes and learning performance are gradually improved during interaction with the intelligent tutor in college~\cite{marwan2020adaptive}. It has been found that immediate rewards are more effective than sparse rewards, toward evaluating decision outcomes~\cite{azizsoltani2019unobserved}. Moreover, in human-centric environments, policy performance may be correlated with the horizon of the environment. For example, medical interventions that have shown good short-term performance may not be effective over the long-term~\cite{azizsoltani2019unobserved}. Consequently, it is important to reconstruct immediate rewards for OPE to work effectively in human-centric environments. 

On the other hand, prior evaluation metrics for OPE are generally error metrics (e.g., absolute error, rank correlations) proposed by~\citet{fu2021benchmarks,voloshin2019empirical}, while human-centric research often emphasizes the need for \emph{statistical significance test in empirical study}~\cite{guilford1950fundamental,zhou2022leveraging}. For example, rank correlation summarizes the performance of a set of policies' relative rankings using averaged returns. Statistical significance tests can tell how likely the relationship we have found is due only to random chance across samples, and they are commonly employed and easier to be interpreted by domain experts~\cite{guilford1950fundamental,ju2019importance}. The framework for validating OPE in human-centric environments may need to be extended beyond error metrics. 

In this work, we propose a \textbf{human-centric OPE (HOPE)} approach to tackle the two challenges above. Specifically, it first reconstructs immediate rewards from the aggregated rewards. Then, importance sampling is used to process the reconstructed rewards and estimate expected total returns. Any existing OPE method can be used jointly with the reconstructed rewards to estimate expected returns. We validate HOPE on two typical real-world human-centric tasks in healthcare and e-learning, \textit{i.e.} sepsis treatments and an intelligent tutoring system (ITS), and extend existing validation frameworks  with significance tests for human-centric environments. To summarize, our work has at least three main contributions:

    $\bullet$ To the best of our knowledge, HOPE is the first OPE approach that tackles both partial observability and aggregated rewards. We also provide theoretical bound for HOPE.
    
    $\bullet$ HOPE is extensively validated through real-world environments in real-world human-centric environments including e-learning and healthcare. The results show that our approach outperforms the state-of-the-art OPE approaches.
    
    $\bullet$ We introduce significance tests on top of existing OPE validation frameworks, to facilitate comprehensive study and comparison of OPE methods in human-centric environments. 

\section{Related Work}

In e-learning and healthcare, RL has been widely investigated to learn policies from historical user interaction data~\cite{nie2022data,gao2022gradient}. However, deploying and evaluating policies online are high stakes in such domains, as a poor policy can be fatal to humans in healthcare. It's thus crucial to propose effective OPE methods for human-centric environments. OPE is used to evaluate the performance of a \emph{target} policy given historical data drawn from (alternative) \emph{behavior} policies. Classic methods, such as expected cumulative reward (ECR)~\cite{tetreault2006comparing}, have been employed in real-world applications such as e-learning~\cite{chi2011empirically}. However, ECR is not statistically consistent, which is a significant concern for high-stakes domains~\cite{mandel2014offline}. In practice, researchers have found that selected policies based on ECR were even not effective compared to random policies with real students in terms of human-centric significant test~\cite{chi2011empirically,shen2016reinforcement}. 

Various contemporary OPE methods have been proposed and can be divided into three general categories~\cite{voloshin2019empirical}: 1) Inverse Propensity Scoring (IPS)~\cite{precup2000eligibility,doroudi2017importance}; 2) Direct Methods~\cite{jiang2016doubly,paduraru2013off,le2019batch,harutyunyan2016q,munos2016safe,farajtabar2018more,liu2018breaking,nachum2019dualdice,uehara2020minimax,xie2019towards,yang2022offline}; 3) Hybrid Methods~\cite{jiang2016doubly,thomas2016data}. IPS has been widely investigated in statistics~\cite{powell1966weighted,horvitz1952generalization} and RL~\cite{precup2000eligibility}, with the key idea to reweigh the rewards in historical data using the importance ratio between $\beta$ and $\pi$. IPS yields consistent estimates and it has several variations including IS~\cite{precup2000eligibility}, WIS~\cite{precup2000eligibility}, PDIS~\cite{precup2000eligibility}, and PHWIS~\cite{doroudi2017importance}, etc. Direct Methods directly estimate the value functions of the evaluation policy. For example, FQE~\cite{le2019batch} is functionally a policy evaluation counterpart to batch Q-learning. DualDICE~\cite{nachum2019dualdice} estimates the discounted stationary distribution ratios, which measure the likelihood that $\pi$ will experience the state-action pair normalized by the probability with which the state-action pair appears in the off-policy data. Hybrid Methods combine aspects of both IPS and direct methods. For example, DR~\cite{jiang2016doubly} is an unbiased estimator leveraging a direct method to decrease the variance of the unbiased estimates produced by IS. These OPEs have achieved desirable performance in  simulated environments. Recently, a few approaches have proposed OPE targeting some real-world tasks such as robotic grasping~\cite{irpan2019off,nie2021learning,chandak2021universal,gao2022offline}. The proposed methods generally assume the rewards are observable from the environment or sparse. It lacks investigation into the effectiveness of applying the existing OPE methods in real-world human-centric environments.

Recently, researchers have recognized the partial observability of human-centric environment and proposed OPE for POMDPs~\cite{tennenholtz2020off,bennett2021off,shi2022minimax,uehara2022future}. They assume that the underlying states may not exist and treat them as confounding for policy evaluation. For example, Tennenholtz et al. propose a method for POMDP with unobserved confounding and compare their method to IS in carefully-designed synthetic environments~\cite{tennenholtz2020off}. The results show that their proposed method can outperform IS under certain levels of confounding. In practice, such as in our intelligent tutoring experiment with real students, confounding is agnostic and unable to control, accompanied by missing immediate rewards. Our goal is to utilize limited observable information from real-world e-learning and healthcare environments, and effectively estimate policy performance with both partial observability and missing immediate rewards.



\begin{figure*}
    \centering
    \includegraphics[width=\linewidth]{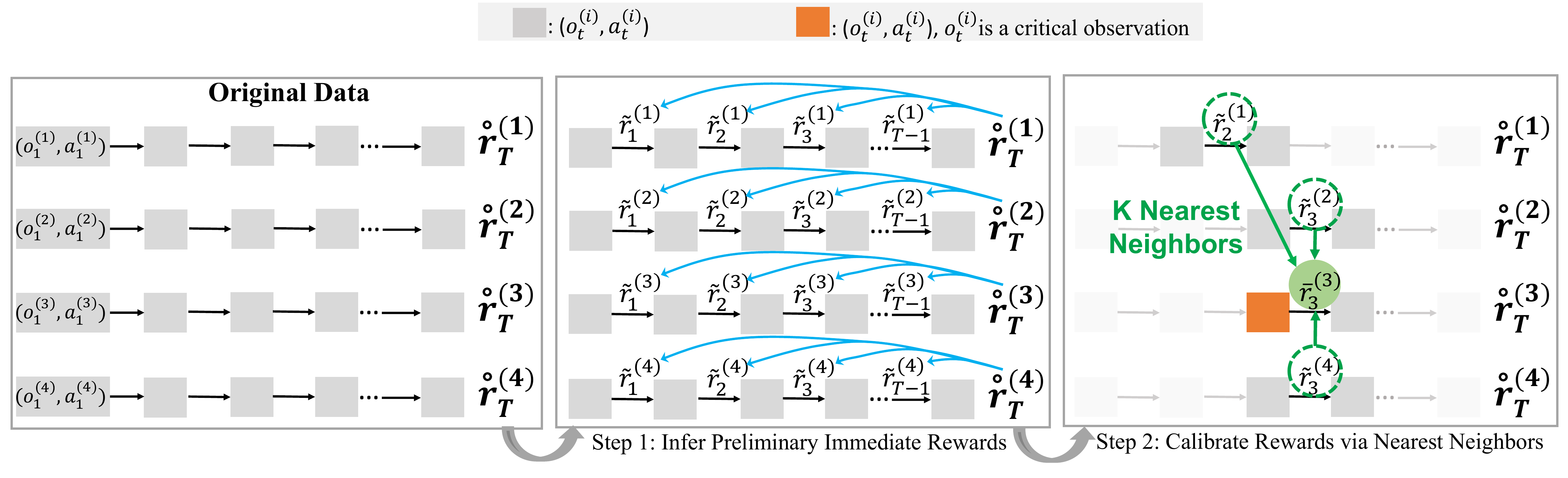}
    \caption{Illustration of reconstructing immediate rewards. First, the preliminary rewards are inferred from aggregated rewards. Then, the immediate rewards on critical observations are calibrated by averaging from nearest neighbors.}
    \label{fig:hope-illustrat}
    \vspace{-5pt}
\end{figure*}

\section{Human-Centric OPE (HOPE)}



\noindent\textbf{Problem Definition:} We consider the human-centric environment as a \textbf{partially observable Markov decision process (POMDP)}, which is a tuple $(\mathcal{S, A, O}, \mathcal{P}, \Omega, R, \gamma)$. 
The state space $\mathcal{S}$ is assumed \textit{unknown}. Moreover, $\mathcal{A}$ represents the action space, $O$ is the observation space, and $\mathcal{P}$ defines transition dynamics from the current state to the next states given an action. The observation model $p(o|s)\sim\Omega$ is also assumed \textit{unknown}. $R$ is the immediate reward function, and $r_t = R(s_t,a_t)$ are the \textit{immediate} rewards. $\gamma\in(0, 1]$ is discount factor. Each episode has a finite horizon $T$. In general, at time-step $t$, the agent is given an observation $o_t\in\mathcal{O}$ by the environment, then chooses an action $a_t\in\mathcal{A}$ following some policy $\pi:\mathcal{O}\rightarrow\mathcal{A}$. Then, the environment provides the immediate reward $r_t$ and next observation $o_{t+1}$. In human-centric tasks, it is common that the immediate rewards $r_t$ are unknown for most $t\in[1, T]$ in a trajectory. However, we assume that an aggregated reward $\mathring{r}_t = \sum_{i=t-W}^{t} \gamma^{i-t+W+1}r_i$ is issued by the environment every $W$ steps, quantifying the summed immediate rewards obtained between $(t-W)$-th and $t$-th steps. 


\noindent\textbf{Off-Policy Evaluation in POMDP:} the goal of OPE under POMDP is to estimate the expected total return over the \textit{target} policy $\pi$, $V^{\pi}=\mathbb{E}[\sum_{t=1}^T{\gamma^{t-1}r_t}|a_t\sim\pi]$, using historical data collected over a \textit{behavioral} policy $\beta\neq\pi$ deployed to the environment. Specifically, the historical data $\mathcal{D}=\{..., [..., (o_t,a_t,r_t,o'_t), ...]^{(i)}, ...\}^N_{i=1}$ consist of a set of $N$ trajectories, where each trajectory is denoted as~$\tau^{(i)}$. 



\vspace{0.1in}
\noindent Figure~\ref{fig:hope-illustrat} illustrates how HOPE addresses the two major challenges of OPE in human-centric environments: partial observability and aggregated rewards.
Specifically, it first infers preliminary immediate rewards from aggregated rewards (Step 1) , and reconstructs immediate rewards $r_t$ using nearest neighbors from trajectories $\mathcal{D}$ (Step 2), which takes into account the uncertainty introduced by partial observability. The intuition of Step 2 follows findings presented in previous research~\cite{yang2020student,keramati2022identification,gao2021early}, where humans with similar underlying states can present similar behaviors. Then, we use a general method, weighted importance sampling, to process the reconstructed rewards and estimate the expected total return $V^\pi$, since it is the most straightforward approach and can help isolate the source of improvements brought in by the immediate rewards reconstruction framework we propose. 

\subsection{Reconstruct Immediate Rewards via Nearest Neighbors}
\label{sec:KNN}

In human-centric environments, immediate rewards are generally not available, while the aggregated rewards themselves may not provide full information that can be directly used to estimate the expected total return of a target policy. 
Although prior works has noted that estimating immediate rewards can provide useful information toward training RL policies~\cite{mataric1994reward,azizsoltani2019unobserved}, it has not be extensively investigated in the context of OPE; since prior OPE works generally assume immediate rewards are observable from the environment or sparse. In many human-centric tasks, interaction outcomes can be gradually improved step-by-step and each immediate
reward on each step can represent meaningful information by itself. Therefore, we propose to reconstruct immediate rewards from historical data, aiming to enrich the information provided for OPE. 



In the OPE problem we consider, the underlying state is unobservable due to POMDP. And the aggregated rewards are assumed to be obtained at the end of an episode following $\mathring{r}=\sum_{t=1}^T{\gamma^t r_t}$, with $T$ being the horizon of the environment. In prior work, function approximation can be used to infer immediate rewards $r_t$ from the aggregated rewards $\mathring{r}$~\cite{azizsoltani2019unobserved}. Consequently, we first produce rough per-step \textit{preliminarily} reconstructed reward $\tilde r$ taking as inputs observations and actions, $\tilde r = f(o_t,a_t|\theta)$, and trained toward,~\textit{i.e.}, 
\begin{align}
\label{hope:loss}
    \min_\theta l(\theta) = \frac{1}{N}\sum\limits_{i=1}^N \big(\mathring{r}-\sum_{t=1}^T{\gamma^{t-1} \tilde r}\big)^2.
\end{align}
We keep the objective straightforward and use a standard training method in~\cite{ausin2021infernet}, so that the source of performance improvements can be isolated. For conciseness, we refer to $\tilde r$ as \textit{preliminary immediate rewards} throughout the rest of the paper. In our experiments described in Section~\ref{sec:ablation}, we compare it to treating the rewards as sparse, and the results present the effectiveness of the preliminary immediate rewards reconstruction procedure. 


Consider that the mapping $f:\mathcal{O}\times\mathcal{A}\rightarrow \mathbb{R}$ learned above may not perfectly reconstruct the immediate reward function $R:\mathcal{S}\times\mathcal{A}\rightarrow \mathbb{R}$, since the observation model $p(o|s)$ and state space $\mathcal{S}$ are both assumed unknown. Previous research have found that humans with similar underlying state can be observed similar behaviors~\cite{yang2020student,yin2020identifying}, and some observations may provide critical information over others~\cite{torrey2013teaching}. We then follow such intuitions and introduce a remedy that uses $\tilde r_t$ and its nearest neighbours, from trajectories that are similar to each other, to reconstruct immediate rewards $r_t$ more accurately. 

Specifically, to capture the information pertaining to the underlying states, we define the reconstructed immediate reward $\hat{r}_t$ considering both average rewards associated with neighboring trajectories and observations that may provide more critical information over others, by which we call \textit{critical observations}, i.e.,
\begin{equation}
\label{eq:potential_reward}
    \hat{r}_t=\mathbbm{1}(o_t\in O^*)\bar{r}_t+\mathbbm{1}(o_t\notin O^*)\tilde{r}_t,
\end{equation}
where $O^*$ is the set of critical observations, $\mathbbm{1}(\cdot)$ is an indicator function.

We first introduce how to identify the critical observations. Given that $Q$-functions, $Q^\pi(o, a)$, representing the expected return of taking action $a$ at observation $o$, the difference of $Q^{\pi}$ between any two actions (over the same $o$) can be used to quantify the magnitude of the difference in the final outcomes. 
As a result, we choose to leverage such $Q$-difference toward determining critical observations for OPE. 
A formal definition of a critical observation can be found below.  
\begin{definition}[Critical Observation]
\label{def:potential}
In a discrete action POMDP, observation $o$ is a critical observation if there exists a constant $h$ such that the maximum difference of $Q^\pi(o,a), a\in\mathcal{A}$ is greater than $h$, i.e., 
\begin{align}
\label{eqa:potential-state}
    \max_{a',a''\in\mathcal{A}}(Q^\pi(o,a')-Q^\pi(o,a''))>h. 
\end{align}
\end{definition}
In Fig.~\ref{fig:critic-obs}, we illustrate how critical observations are identified using the threshold $h$. The need of critical observations is further justified by ablation study in Section~\ref{sec:ablation}.

Denote $\hat{r}^{(i)}_t$ as the reconstructed immediate reward at time-step $t$ on trajectory $\tau^{(i)}$. We define the averaged reward $\bar{r}^{(i)}_t$ associated with critical observation $o^{(i)}_t$
as an average of preliminary immediate rewards $\tilde{r}$ that occur at neighboring events following
\begin{equation}
    \bar{r}^{(i)}_t = \frac{1}{K} \sum_{k}{\tilde{r}^{(k)}_{t'}},
    \label{eqa:knn}
\end{equation}
where the summation is performed over all $\mathring{r}^{(k)}_{t'}$ and 
$K$ is the total number of nearest neighbours pertaining to the algorithm~\ref{alg:KNN}. Specifically, we define that two trajectories are neighbors if they have similar visitation distributions over the observation and action spaces, following
\begin{equation}
    d(\tau^{(i)},\tau^{(k)}) = similarity(o^{(i)},o^{(k)})+ similarity(a^{(i)},a^{(k)})
    \label{eqa:distance}
\end{equation}
where the similarity of distributions can be calculated by some measures such as Kullback-Leibler (KL) divergence~\cite{kullback1951KL}.
Then we define that two observations are neighbors if they are from two neighboring trajectories, and share the most similar observations, since the moment that sharing similar observations from the similar trajectories may represent close underlying state of humans~\cite{yang2020student,yin2020identifying}. To calculate similarity between two observations, one can use distance measures such as Euclidean. We can always find one and only one neighboring observation on a given neighboring trajectory, by breaking ties appropriately~--~such as selecting the earliest observation. Algorithm~\ref{alg:KNN} summarizes how to find the K-nearest neighbors of $o_t^{(i)}$. In our experiment, the proposed distance to find nearest neighbors is compared to random selection in Section~\ref{sec:ablation}, and the results show the superior performance of the proposed distance.

\begin{figure}
\centering
\includegraphics[width=0.9\linewidth]{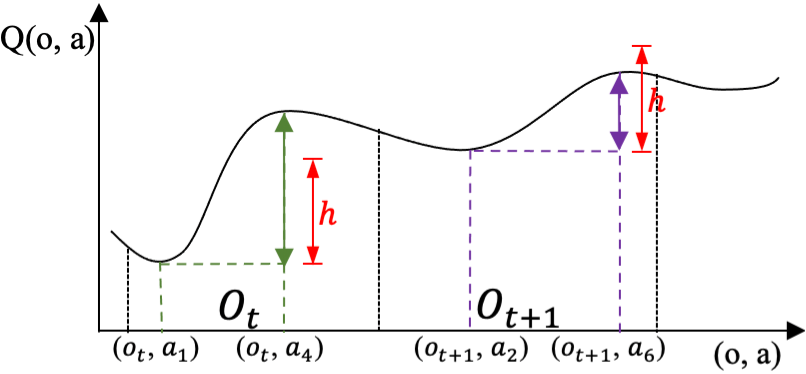}
\caption{Identify critical observations based on the maximum difference of $Q$ values. $o_t$ is a critical observation with difference of $Q$ greater than a threshold $h$.}
\label{fig:critic-obs}
\end{figure}

\begin{algorithm}[t]
\caption{Find K Nearest Neighbors of $o^{(i)}_t$.}\label{alg:KNN}
\begin{algorithmic}[1]
\REQUIRE Historical data $\mathcal{D}$, hyper-parameter $K$.\\
\ENSURE
\STATE Initialize an empty set $Z^i_t$. 
\FOR {each trajectory $\tau^{(j)}\in \mathcal{D}$}
\STATE Calculate the distance $d(\tau^{(i)},\tau^{(j)})$ following~\eqref{eqa:distance}.
\ENDFOR
\STATE Arrange $d(\tau^{(i)},\tau^{(j)}), j\in N$ in non-decreasing order, take the first $K$ elements and find the corresponding trajectories as \textit{neighboring trajectories of $\tau^{(i)}$}. 
\FOR {each neighboring trajectory $\tau^{(k)}$ of $\tau^{(i)}$}
\FOR {each observation $o^{(k)}_{t'}\in\tau^{(k)}$}
\STATE Calculate the distance $l^{ik}_{tt'}$ between $o^{(k)}_{t'}$ and $o^{(i)}_{t}$.
\ENDFOR
\STATE Arrange $l^{ik}_{tt'}, t'\in T$ in non-decreasing order, take the first element and find the corresponding observation $o^{(k)}_{t'}$ as \textit{neighboring observation of $o^{(i)}_t$}.
\STATE $Z^i_t\leftarrow Z^i_t\cup o^{(k)}_{t'}$
\ENDFOR
\STATE Return the set $Z^i_t$.
\end{algorithmic}
\end{algorithm}

\subsection{HOPE}
Based on the proposed solutions regarding the challenges within human-centric environment, we propose our \textit{human-centric OPE (HOPE)} by incorporating estimated immediate rewards with weighted importance sampling (WIS). Let $w_i:=\Pi_{t=1}^{T} {\pi(a^{(i)}_t|o^{(i)}_t) / \beta(a^{(i)}_t|o^{(i)}_t)}$ be an \textit{importance weight}, which is the probability of the first $T$ steps of $\tau^{(i)}$ under the \textit{target policy}, $\pi$, divided by its probability under the behavior policy, $\beta$~\cite{precup2000eligibility}. The estimated return of HOPE for an target policy $\pi$ is 
\begin{equation}
    V^{\pi}(\mbox{HOPE}) = \frac{1}{\sum_{i=1}^N{w_i}} \sum_{i=1}^N{w_i} \sum_{t=1}^{T}
    {\gamma^{t-1} 
    \hat{r}^{(i)}_t},
\label{eqa:npr}
\end{equation}
We choose to use WIS here since it is the most straightforward approach. Moreover, this can help isolate the source of improvements brought in by immediate rewards reconstruction framework we propose. Prior work note that the lower variance of WIS may produce a larger reduction in expected square error than the additional error incurred due to the bias compared to some unbiased OPE such as importance sampling (IS) in practice~\cite{thomas2015policy}. Our real-world experimental results in Section~\ref{sec:experiment} further support this.

In general, the importance weight assumes that the support of the evaluation policy $\pi$ is a subset of the behavior policy $\beta$, which is enforced by Assumption~\ref{assp:1}~\cite{thomas2015safe}:

\begin{assumption}
\label{assp:1}
If $\pi(a|o)\neq 0$, then $\beta(a|o)\neq 0$, where $a\in\mathcal{A}, o\in\mathcal{O}$.
\end{assumption}

\paragraph{Upper and Lower Bounds of HOPE} We define $\delta^{ik}$ as the event where $\tau^{(k)}$ is a neighbor of $\tau^{(i)}$. 
Also, we define the counts of neighboring events as $K=\sum_{k=1}^N \mathbbm{1}(\delta^{ik})$, where $\mathbbm{1}(\cdot)$ is the indicator function. We then construct the nearest-neighbors matrix:
    $M^{ik} = \frac{\mathbbm{1}(\delta^{ik})}{K}$.
As The $N\times N$ matrix $M^{ik}$ can be computed from the data and be used to compute the estimated immediate rewards for all observation-action pairs using the following proposition.
\begin{proposition}
\label{prop:knn-reward}
For all transitions in the data, the estimated immediate rewards for corresponding observation-action pairs are given by
\begin{equation}
    \hat{r}^{(i)}_t = \sum_{k}M^{ik} \equiv [\textbf{Mu}]_i
\end{equation}
\end{proposition}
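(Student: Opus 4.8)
The plan is to show that the matrix--vector expression is nothing more than Eq.~\eqref{eqa:knn}, combined with the case split in Eq.~\eqref{eq:potential_reward}, rewritten in linear-algebraic form; so the ``proof'' is a direct unfolding of the definitions rather than a substantive argument. First I would fix an arbitrary transition, say the $t$-th step of trajectory $\tau^{(i)}$, and make the vector $\mathbf{u}$ precise: for this query transition, Algorithm~\ref{alg:KNN} assigns to every trajectory $\tau^{(k)}$ a unique neighboring observation $o^{(k)}_{t'}$ (ties broken as described there), so set $u_k := \tilde r^{(k)}_{t'}$, the preliminary immediate reward at that observation, and read $[\mathbf{Mu}]_i$ as $\sum_{k=1}^N M^{ik}u_k$. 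With $M^{ik}=\mathbbm{1}(\delta^{ik})/K$ and $K=\sum_{k=1}^N\mathbbm{1}(\delta^{ik})$, the $i$-th entry $\sum_{k=1}^N \frac{\mathbbm{1}(\delta^{ik})}{K}\,u_k$ is by construction the average of $\tilde r^{(k)}_{t'}$ taken over exactly the $K$ nearest neighbors of $\tau^{(i)}$.

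Next I would split on whether $o^{(i)}_t$ is critical. If $o^{(i)}_t\in O^*$, then Eq.~\eqref{eq:potential_reward} gives $\hat r^{(i)}_t=\bar r^{(i)}_t$, and Eq.~\eqref{eqa:knn} is literally $\bar r^{(i)}_t=\frac{1}{K}\sum_k \tilde r^{(k)}_{t'}=\sum_{k=1}^N \frac{\mathbbm{1}(\delta^{ik})}{K}\,u_k=[\mathbf{Mu}]_i$, which closes this case. If $o^{(i)}_t\notin O^*$, then $\hat r^{(i)}_t=\tilde r^{(i)}_t$; to keep the single identity $\hat r^{(i)}_t=[\mathbf{Mu}]_i$ valid for \emph{all} transitions, one adopts the convention that a non-critical observation is its own sole neighbor, i.e.\ $\delta^{ii}$ holds, $K=1$, and the $i$-th row of $\mathbf M$ is the standard basis vector, so $[\mathbf{Mu}]_i=u_i=\tilde r^{(i)}_t$. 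Ranging over all transitions and stacking the rows then yields the stated matrix identity.

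The only real care needed --- and the part I expect to spend the most words on --- is the indexing bookkeeping: $\mathbf{u}$ is not one fixed vector but depends on the query transition through the neighbor map $t\mapsto t'$ produced by Algorithm~\ref{alg:KNN}, so the compact $N\times N$ notation is mildly abusive and must be read row by row; and the convention governing the non-critical rows has to be stated explicitly so that $\mathbf M$ remains data-computable and row-stochastic. Once those conventions are pinned down, nothing further is required: there is no estimation, concentration, or optimization step here, only substitution of Eqs.~\eqref{eq:potential_reward} and~\eqref{eqa:knn} into the definition $M^{ik}=\mathbbm{1}(\delta^{ik})/K$.
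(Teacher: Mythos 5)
Your proof is correct and follows essentially the same route as the paper's: both are a direct unfolding of the definitions of $M^{ik}$, $K$, and Eq.~\eqref{eqa:knn} to exhibit the nearest-neighbor average as the $i$-th entry of $\mathbf{Mu}$, with no estimation or limiting argument involved. The only difference is that you explicitly handle the non-critical branch of Eq.~\eqref{eq:potential_reward} via a self-neighbor convention and spell out the row-by-row dependence of $\mathbf{u}$ on the query transition --- details the paper's proof leaves implicit --- but this is added care, not a different argument.
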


\begin{proof}
A given reward on trajectory $i$ at timestamp $t$, that averaging over all rewards on trajectory $j$ at timestamp $t'$ such that $\delta_{tt'}^{ij}$ holds, can be written as $\frac{1}{K}\sum_{(j,t'):\delta_{tt'}^{ij}} = \sum_{j,t'}\frac{\mathbbm{1}(\delta_{tt'}^{ij})}{K} = \sum_{j,t'}M_{tt'}^{ij}$.
Therefore, assume $u(o,a)$ is a function over the observation-action space and $\textbf{u}$ is the vector containing the quantity $u_i=u(o^{(i)},a^{(i)})$ for every $(o^{(i)},a^{(i)})$, the nearest-neighbor estimation of $u(o^{(i)},a^{(i)})$ is given by $[\textbf{Mu}]_i$.
\end{proof} 

In the case of $\tilde{r}\in[\tilde{r}_{lb},\tilde{r}_{ub}]$, $\hat{r}\in[\tilde{r}_{lb},\tilde{r}_{ub}]$ according to Proposition~\ref{prop:knn-reward}. Denote the returns of trajectory $\tau^{(i)}$ as $G(\tau^{(i)})$. Following~\cite{thomas2015safe}, we also write $G(\tau^{(i)})\in [0, 1]$ s.t. $G(\tau^{(i)}):=\frac{(\sum_{t=1}^T\gamma^{t-1}\hat{r}^{(i)}_t)-G_{lb}}{G_{ub}-G_{lb}}$ to denote quantification of how good a trajectory $\tau^{(i)}$ is. Then the HOPE estimation is written by $\frac{1}{\sum_{n=1}^N{w_i}} \sum_{i=1}^N{w_i} G(\tau^{(i)}) = \frac{\sum_{i=1}^N{w_i} G(\tau^{(i)})}{\sum_{i=1}^N{w_i}}$. Then the upper and lower bound of HOPE estimation can be calculated using the following lemma, the proof of which is given in~\cite{thomas2015safe}.

\begin{lemma}
\label{lemma:1}
Let $\pi$ and $\beta$ be any policy such that Assumption~\ref{assp:1} holds, then for any constant integer $m\geq 1$, $\mathbb{E}[\prod^m_{t=1} \frac{\pi(a_t|o_t)}{\beta(a_t|o_t)} | \tau\sim \beta] = 1.$
\end{lemma}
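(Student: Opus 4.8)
The plan is to prove the identity by induction on $m$, peeling off one importance‑ratio factor at a time via the tower property of conditional expectation. The structural fact that makes this work is that, under $\tau\sim\beta$, the action $a_t$ is drawn from $\beta(\cdot\mid o_t)$ while the law of the next observation $o_{t+1}$ depends on the history only through the environment's (policy‑independent) transition and observation kernels; so conditioning on the history up through $o_m$ fixes everything needed to evaluate the final factor. Assumption~\ref{assp:1} enters only to make the ratio well defined: on the $\beta$‑null set of actions we adopt the convention $\pi(a|o)/\beta(a|o)=0$, which is consistent because $\beta(a|o)=0$ forces $\pi(a|o)=0$ there.

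First I would dispatch the base case $m=1$. Writing the expectation out over the action, $\mathbb{E}[\pi(a_1|o_1)/\beta(a_1|o_1)\mid\tau\sim\beta]=\mathbb{E}_{o_1}\big[\sum_{a\in\mathcal{A}}\beta(a|o_1)\,\pi(a|o_1)/\beta(a|o_1)\big]$; every summand with $\beta(a|o_1)=0$ contributes $0$, and for the remaining $a$ the $\beta$ cancels, leaving $\mathbb{E}_{o_1}\big[\sum_a\pi(a|o_1)\big]=\mathbb{E}_{o_1}[1]=1$.

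For the inductive step I would assume the claim for $m-1$ and let $H_m:=(o_1,a_1,\dots,o_{m-1},a_{m-1},o_m)$ denote the history through the $m$-th observation. Since the first $m-1$ ratios are $H_m$-measurable,
\begin{align*}
\mathbb{E}\Big[\prod_{t=1}^{m}\tfrac{\pi(a_t|o_t)}{\beta(a_t|o_t)}\,\Big|\,\tau\sim\beta\Big]
=\mathbb{E}\Big[\prod_{t=1}^{m-1}\tfrac{\pi(a_t|o_t)}{\beta(a_t|o_t)}\cdot\mathbb{E}\big[\tfrac{\pi(a_m|o_m)}{\beta(a_m|o_m)}\,\big|\,H_m\big]\Big].
\end{align*}
Conditioned on $H_m$ the action $a_m$ is distributed as $\beta(\cdot\mid o_m)$, so by the same cancellation as in the base case the inner expectation equals $\sum_a\pi(a|o_m)=1$. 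The outer expectation then collapses to $\mathbb{E}\big[\prod_{t=1}^{m-1}\pi(a_t|o_t)/\beta(a_t|o_t)\mid\tau\sim\beta\big]$, which is $1$ by the inductive hypothesis, closing the induction.

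The only place where care is needed — and the main (though mild) obstacle I anticipate — is the bookkeeping around the POMDP generative model: one must verify that the behavior policy's action at step $t$ depends on the past only through $o_t$, and that the kernel producing $o_{t+1}$ does not depend on $\pi$, so that the conditioning in the tower step is legitimate and the inner expectation really is over $\beta(\cdot\mid o_m)$. Notably, the argument never invokes observability of the underlying state, so it transfers verbatim to the partially observable setting used throughout the paper.
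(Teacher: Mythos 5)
Your proof is correct. Note that the paper does not actually supply its own argument for this lemma; it defers entirely to the cited reference (Thomas et al.'s safe policy improvement work), where the standard proof is a one-shot expansion: write the expectation as a sum over trajectories, observe that the $\beta(a_t|o_t)$ factors in the trajectory density cancel the denominators of the importance ratios, and conclude that what remains is the probability of the trajectory under $\pi$ (with the same, policy-independent transition and observation kernels), which sums to $1$. Your induction via the tower property is an equivalent reorganization of the same cancellation, peeling off one factor at a time rather than cancelling all of them simultaneously; it is slightly more verbose but makes the measurability bookkeeping (that $a_m\sim\beta(\cdot\mid o_m)$ given the history, and that the kernels generating $o_{t+1}$ do not depend on the policy) fully explicit, and your handling of the $\beta(a|o)=0$ case via Assumption~\ref{assp:1} is exactly right. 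Either route is fine; nothing is missing.
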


From Lemma~\ref{lemma:1}, with the number of samples increasing, the denominator of HOPE tends towards $n$. HOPE estimator is bounded within $[0,1]$, and so $\rho^{HOPE}_{lb}(\pi,\beta)=0$ and $\rho^{HOPE}_{ub}(\pi,\beta)=1$ for all $\pi$ and $\beta$, where $\rho(\pi)=\mathbb{E}[G(\tau)|\tau\sim\beta]$.

\paragraph{Consistency of HOPE} We also show that HOPE is a consistent estimator of $\rho(\pi)$ if there is a single behavior policy (\textit{Theorem~\ref{theorem:single}}) or if there are multiple behavior policies that satisfy a technical requirement (\textit{Theorem~\ref{theorem:multiple}}), following work by~\cite{thomas2015safe}.

\begin{theorem}
\label{theorem:single}
If Assumption~\ref{assp:1} holds and there is only one behavior policy, then HOPE is a consistent estimator of $\rho(\pi)$.
\end{theorem}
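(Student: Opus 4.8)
The plan is to reduce the consistency of HOPE to the known consistency of the weighted importance sampling (WIS) estimator, treating the per-trajectory return $G(\tau^{(i)})$ built from the reconstructed rewards $\hat r^{(i)}_t$ as the quantity being averaged. Concretely, write the HOPE estimate as $\widehat\rho^{HOPE}_N(\pi) = \frac{\sum_{i=1}^N w_i\, G(\tau^{(i)})}{\sum_{i=1}^N w_i}$, where the normalization to $G(\tau^{(i)})\in[0,1]$ uses the constants $G_{lb},G_{ub}$ from the excerpt, and recall $w_i = \prod_{t=1}^T \pi(a^{(i)}_t|o^{(i)}_t)/\beta(a^{(i)}_t|o^{(i)}_t)$. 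First I would invoke Assumption~\ref{assp:1} so that the importance weights are well defined on the support of $\beta$, and Lemma~\ref{lemma:1} to get $\mathbb{E}[w_i \mid \tau\sim\beta]=1$; by the strong law of large numbers the denominator $\frac1N\sum_i w_i \to 1$ almost surely. For the numerator, note that with a single behavior policy $\beta$ the pairs $(\tau^{(i)})$ are i.i.d.\ draws from $\beta$, so $\frac1N\sum_i w_i G(\tau^{(i)}) \to \mathbb{E}[w\, G(\tau)\mid\tau\sim\beta]$ almost surely, and the standard change-of-measure argument gives $\mathbb{E}[w\,G(\tau)\mid\tau\sim\beta] = \mathbb{E}[G(\tau)\mid\tau\sim\pi] = \rho(\pi)$. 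Slutsky / the continuous mapping theorem for the ratio then yields $\widehat\rho^{HOPE}_N(\pi)\to\rho(\pi)$ a.s., hence in probability, which is the claimed consistency.

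The one genuinely new ingredient relative to the classical WIS argument is that $G(\tau^{(i)})$ is itself computed from the data via the nearest-neighbor reconstruction $\hat r^{(i)}_t = [\mathbf{Mu}]_i$ of Proposition~\ref{prop:knn-reward}, so $G(\tau^{(i)})$ is not a fixed function of $\tau^{(i)}$ alone — it depends on the whole sample through the matrix $M$. The key step, and the one I expect to be the main obstacle, is therefore to show this data dependence is asymptotically negligible: I would argue that $\hat r^{(i)}_t$ stays in the bounded interval $[\tilde r_{lb},\tilde r_{ub}]$ (already observed in the excerpt), so $G(\tau^{(i)})\in[0,1]$ uniformly, and that as $N\to\infty$ the reconstructed rewards converge (pointwise on observation–action pairs, or at least in an averaged sense sufficient for the law of large numbers) to a fixed limiting function of $(o,a)$. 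Combined with uniform boundedness, this lets one replace $G(\tau^{(i)})$ by its limit inside the average at no asymptotic cost — e.g.\ via a triangle-inequality split of $\frac1N\sum_i w_i G(\tau^{(i)})$ into the term with the limiting return plus a remainder controlled by $\frac1N\sum_i w_i \cdot \sup_i |G(\tau^{(i)}) - G_\infty(\tau^{(i)})|$, where the supremum vanishes and $\frac1N\sum_i w_i$ is bounded.

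If a fully rigorous convergence-of-$\hat r$ statement is awkward, the cleaner fallback is to observe that consistency of WIS (as proved in~\cite{thomas2015safe}) only requires the averaged quantity to be bounded in $[0,1]$ and i.i.d.\ under $\beta$ conditional on the sampling design; since $\hat r$ is a deterministic, permutation-equivariant function of the i.i.d.\ trajectories and is uniformly bounded, the array $\{w_i G(\tau^{(i)})\}$ is a row-wise i.i.d.\ triangular array with uniformly bounded entries, and a triangular-array LLN (or the Thomas result applied verbatim, since that proof is already stated to cover this estimator form) gives the numerator's convergence. I would then close by citing Theorem analog in~\cite{thomas2015safe} for the single-behavior-policy case and noting that the only modification is substituting $\hat r$ for the observed immediate rewards, which is legitimate precisely because $\hat r$ is bounded and computed from the same data, leaving the distributional structure that the WIS consistency proof relies on intact.
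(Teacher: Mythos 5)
Your proof follows the same skeleton as the paper's: rewrite HOPE as the ratio $\frac{\frac{1}{N}\sum_i w_i G(\tau^{(i)})}{\frac{1}{N}\sum_i w_i}$, identify the numerator with the (consistent) importance-sampling estimator, use Lemma~\ref{lemma:1} together with a strong law of large numbers for the identically distributed weights to send the denominator to $1$ almost surely, and conclude by the algebra of almost-sure limits. Where you genuinely depart from (and improve on) the paper is your second paragraph: you observe that $G(\tau^{(i)})$ is built from the reconstructed rewards $\hat r^{(i)}_t$ and therefore depends on the entire sample through the nearest-neighbor matrix $M$ (and, one should add, through the fitted $\tilde r = f(o_t,a_t|\theta)$ and the learned $Q$ used to define critical observations), so the numerator is not literally the IS estimator of a fixed bounded function of $\tau^{(i)}$, and the classical consistency results of~\cite{precup2000eligibility,thomas2015safe} do not apply verbatim. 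The paper's proof passes over this silently by asserting ``the numerator is equal to $IS^{\pi}$''; your proposal to establish convergence of $\hat r$ to a limiting function of $(o,a)$, or to invoke a triangular-array law of large numbers for the exchangeable, uniformly bounded array $\{w_i G(\tau^{(i)})\}$, is exactly the ingredient needed to make the argument rigorous. One caveat to carry through: even with that step, the limit you obtain is $\mathbb{E}[G_\infty(\tau)\mid\tau\sim\pi]$ for the limiting reconstructed return $G_\infty$, which coincides with the paper's $\rho(\pi)$ only under the implicit identification of $\rho$ with the value of the reconstructed-reward return; consistency for the true $V^{\pi}$ would additionally require that the reconstruction recovers the true immediate rewards.
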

\begin{proof}
When there is only one behavior policy, HOPE estimation can be rewrote as $\frac{\frac{1}{N}\sum_{i=1}^N{w_i} G(\tau^{(i)})} {\frac{1}{N} \sum_{i=1}^N{w_i}}$ by multiplying both its numerator and denominator by $\frac{1}{N}$. Then the numerator is equal to $IS^{\pi}$, which is a consistent estimator of $\rho(\pi)$ as proved in prior work~\cite{precup2000eligibility,thomas2015safe}), thus the numerator converges almost surely to $\rho(\pi)$. For the denominator, by Lemma~\ref{lemma:1}, we have that
\begin{equation}
\label{eq:appendix-prod}
    \mathbb{E}[\prod^T_{t=1} \frac{\pi(a^{(i)}_t|o^{(i)}_t)}{\beta(a^{(i)}_t|o^{(i)}_t})] = 1, \text{for all }i\in[{1,\dots,N}].
\end{equation}
Each term $\prod^T_{t=1} \frac{\pi(a^{(i)}_t|o^{(i)}_t)}{\beta(a^{(i)}_t|o^{(i)}_t)}$ is identically distributed for each $i\in{[1,\dots,N]}$, as there is only one behavior policy. By the Khintchine strong law of large numbers~\cite{sen1994large}, we have that
\begin{equation}
\label{eq:appendix-as}
    \frac{1}{N} \sum^N_{i=1} \prod^T_{t=1} \frac{\pi(a^{(i)}_t|o^{(i)}_t)}{\beta(a^{(i)}_t|o^{(i)}_t)} \xrightarrow{\text{a.s.}} 1.
\end{equation}
By the property of almost sure convergence~\cite{jiang2010large}, 
HOPE converges almost surely to $\rho(\pi)$, and so HOPE is a consistent estimator of $\rho(\pi)$. 
\end{proof}

We also provide the proof for the consistency of HOPE when there are multiple behavior policies. 
\begin{theorem}
\label{theorem:multiple}
If Assumption~\ref{assp:1} holds and there exists a constant $\epsilon>0$ such that
$\beta_i(a|o)\geq \epsilon$ for all $i\in\{1,\dots, N\}$ and $(a|o)$ where $\pi(a|o)\neq 0$, then HOPE is a consistent estimator of $\rho(\pi)$ if there are multiple behavior policies.
\end{theorem}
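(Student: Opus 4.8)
The plan is to reuse the ratio decomposition from the proof of Theorem~\ref{theorem:single} and write
\[
V^{\pi}(\mbox{HOPE}) = \frac{\tfrac{1}{N}\sum_{i=1}^N w_i\, G(\tau^{(i)})}{\tfrac{1}{N}\sum_{i=1}^N w_i},
\]
then show the numerator converges almost surely to $\rho(\pi)$ and the denominator to $1$. The only genuinely new difficulty is that, with multiple behavior policies, the summands $w_i$ and $w_i G(\tau^{(i)})$ are \emph{independent but no longer identically distributed} (trajectory $\tau^{(i)}$ is generated under $\beta_i$), so the Khintchine i.i.d.\ strong law used before does not apply. Instead I would invoke Kolmogorov's strong law of large numbers for independent, non-identically-distributed random variables, whose hypothesis is the summable-variance condition $\sum_{i\ge 1}\mathrm{Var}(X_i)/i^2 < \infty$.

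First I would turn the $\epsilon$-lower-bound hypothesis into a uniform bound on the weights. For each factor of $w_i$, either $\pi(a^{(i)}_t|o^{(i)}_t)=0$, in which case $w_i=0$, or $\pi(a^{(i)}_t|o^{(i)}_t)\ne 0$, in which case the hypothesis gives $\beta_i(a^{(i)}_t|o^{(i)}_t)\ge\epsilon$ and hence the factor is at most $1/\epsilon$; in all cases $0\le w_i\le \epsilon^{-T}$. Since $G(\tau^{(i)})\in[0,1]$ by construction, also $0\le w_i G(\tau^{(i)})\le \epsilon^{-T}$. Bounded variables have variance at most $\tfrac14\epsilon^{-2T}$, so both $\sum_i \mathrm{Var}(w_i)/i^2$ and $\sum_i \mathrm{Var}(w_i G(\tau^{(i)}))/i^2$ converge, and the summands are mutually independent because trajectories are drawn independently. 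For the means, Lemma~\ref{lemma:1} applied trajectory-by-trajectory with $\beta_i$ in place of $\beta$ gives $\mathbb{E}[w_i\mid\tau^{(i)}\sim\beta_i]=1$ for every $i$, and the standard importance-sampling identity (valid under Assumption~\ref{assp:1}) gives $\mathbb{E}[w_i G(\tau^{(i)})\mid\tau^{(i)}\sim\beta_i]=\rho(\pi)$ for every $i$, regardless of which behavior policy was used.

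Applying Kolmogorov's SLLN to the centered sums then yields $\tfrac{1}{N}\sum_{i=1}^N w_i \xrightarrow{\text{a.s.}} 1$ and $\tfrac{1}{N}\sum_{i=1}^N w_i G(\tau^{(i)}) \xrightarrow{\text{a.s.}} \rho(\pi)$. Since the denominator's limit $1$ is nonzero, the ratio of these two almost surely convergent sequences converges almost surely to $\rho(\pi)/1 = \rho(\pi)$, by the same property of almost sure convergence invoked in the proof of Theorem~\ref{theorem:single}, so HOPE is a consistent estimator of $\rho(\pi)$; this parallels the argument of~\cite{thomas2015safe}. The main obstacle I expect is entirely the non-identical distribution of the summands: the whole argument hinges on converting the $\epsilon$-bound into the uniform weight bound $\epsilon^{-T}$ so that a non-i.i.d.\ strong law is applicable — without it the weights could have non-summable variances and almost sure convergence of the denominator (hence of the ratio) would fail.
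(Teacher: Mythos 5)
Your proof is correct and follows essentially the same route as the paper: the same numerator/denominator decomposition, Lemma~\ref{lemma:1} for the unit mean of the weights, the $\epsilon$-hypothesis to bound each weight by $\epsilon^{-T}$ and hence its variance, and the Kolmogorov strong law for independent, non-identically-distributed summands. If anything you are more careful than the paper, which applies the bounded-variance Kolmogorov argument only to the denominator and simply asserts that the numerator ``equals $IS^{\pi}$ and converges almost surely,'' whereas you correctly note that the numerator's summands are likewise non-identically distributed under multiple behavior policies and require the same treatment.
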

\begin{proof}
When there are multiple behavior policies, similar to the proof of Theorem~\ref{theorem:single}, the numerator is equal to $IS^{\pi}$. The numerator converges almost surely to $\rho(\pi)$. For the denominator, by Lemma~\ref{lemma:1} we have that Equation~\ref{eq:appendix-prod} holds. Each term $\prod^T_{t=1} \frac{\pi(a^{(i)}_t)|o^{(i)}_t}{\beta_i(a^{(i)}_t)|o^{(i)}_t} \in [0,\frac{1}{\epsilon^T}]$ and therefore has bounded variance. By the Kolmogorov strong law of large numbers~\cite{sen1994large}, we have that almost surely convergence~\ref{eq:appendix-as} holds. Therefore, HOPE is a
consistent estimator of $\rho(\pi)$ if there are multiple behavior policies.
\end{proof}

\begin{figure} [b!]
\centering
  \centering
  \includegraphics[width=0.75\linewidth]{./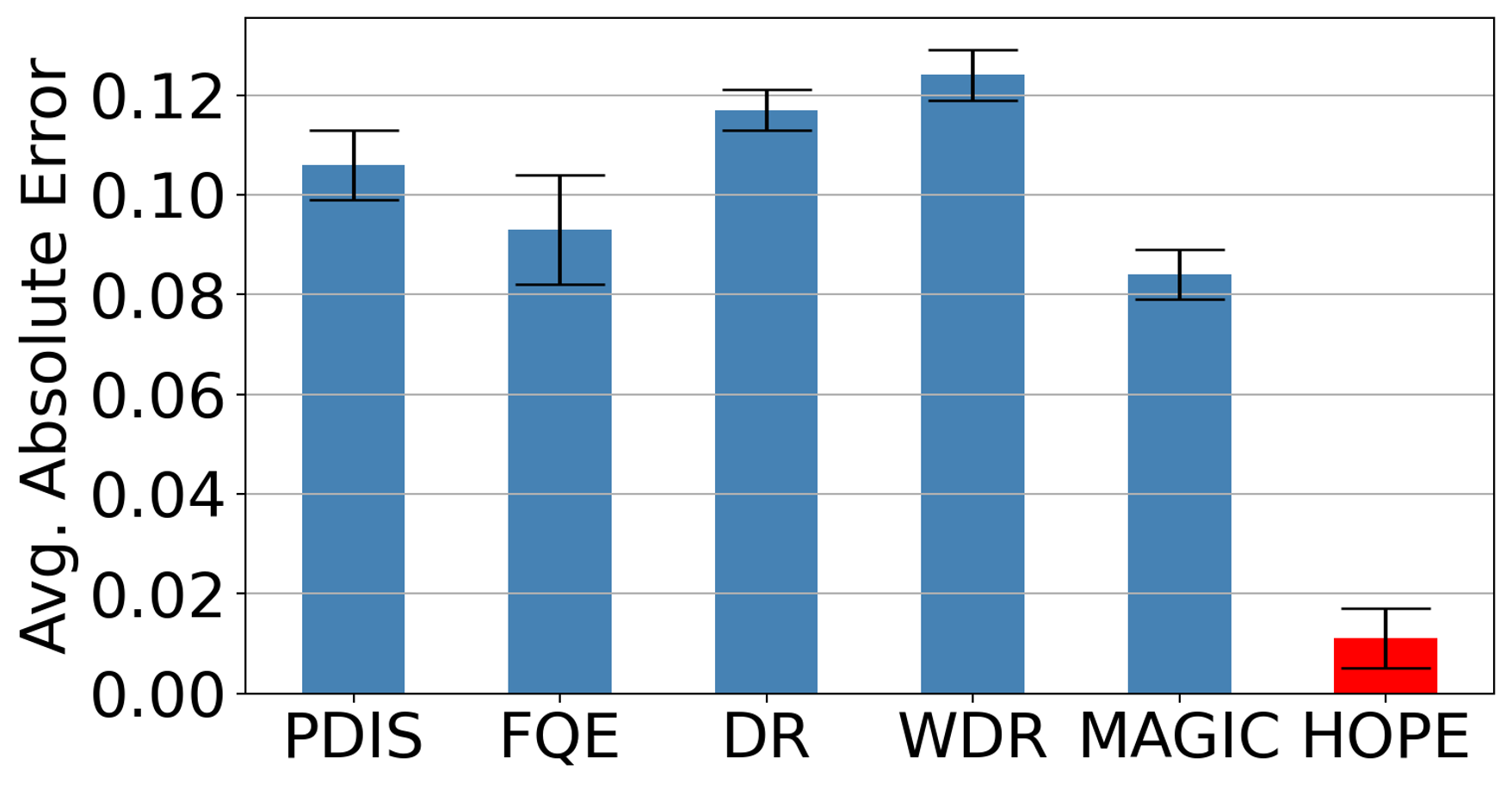}
  \caption{Average absolute error and standard deviation from synthetic sepsis environment ($\gamma$=0.99). IS, WIS, PHWIS, DualDICE are unable to select the best policy.}
  \label{fig:sim-result}
\end{figure}

\begin{figure*}[t!]
  \centering
  \includegraphics[scale=0.7]{./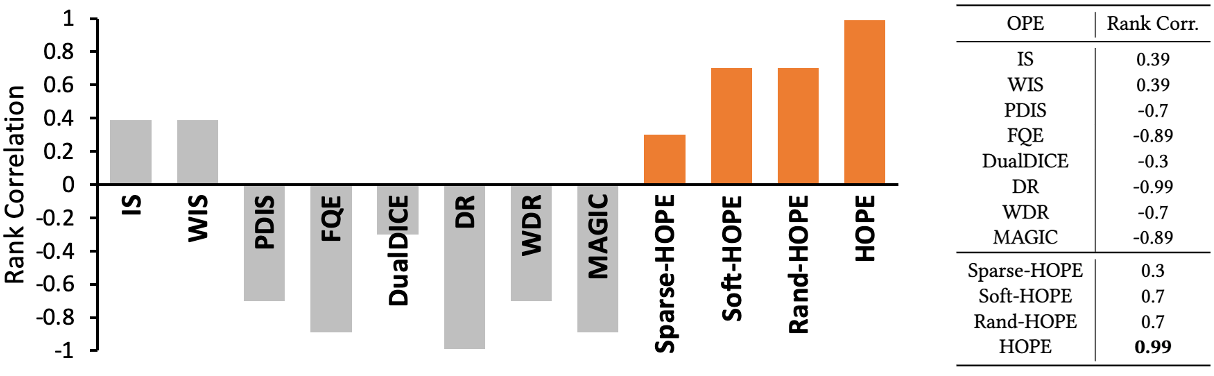}
  \caption{Spearman's rank correlation coefficient across policies from real-world medical system ($\gamma$=0.99). PHWIS is unable to produce a meaningful result probably due to the highly varied lengths of trajectories.}
  \label{fig:health-result}
\end{figure*} 

\section{Experiments}
\label{sec:experiment}
We conduct experiments on two real-world human-centric tasks, sepsis treatment, and intelligent tutoring, to validate our proposed approach. Since our focus is the human-centric environments, we don't conduct experiments on control tasks such as D4RL~\cite{fu2020d4rl}.

\subsection{Benchmarks}
We use \textbf{\emph{nine}} state-of-the-art benchmarks, which cover a variety of approaches that have been explored for OPE: \textit{Four IPS methods}, including IS~\cite{precup2000eligibility}, WIS~\cite{precup2000eligibility}, Per-Decision IS ( PDIS)~\cite{precup2000eligibility}, and Per-Horizon WIS (PHWIS)~\cite{doroudi2017importance}. For PHWIS, we follow the PHWIS-Behavior as in~\cite{doroudi2017importance}, as we assume the lengths of the trajectories do not depend on the policy that is used to generate them. \textit{Two direct methods}, Fitted Q Evaluation (FQE)~\cite{le2019batch} and Dual stationary DIstribution Correction Estimation (DualDICE)~\cite{nachum2019dualdice}. For FQE, as in~\cite{le2019batch}, we train a neural network to estimate the value of the evaluation policy $\pi_e$ by bootstrapping from $Q(o',a')$. For DualDICE, we use the open-sourced code in its original paper. \textit{Three hybrid methods}, including Doubly Robust (DR), Weighted DR (WDR)~\cite{thomas2016data}, and MAGIC~\cite{thomas2016data}, which finds an optimal linear combination among a set that varies the switch point between WDR and direct methods. For MAGIC, we use the implementation of~\cite{voloshin2019empirical}. 

\subsection{Validating OPE Performance}
In this work, we use two types of procedures to validate the performance of OPE methods. We use standard metrics including absolute error, regret@1, and Spearman's rank correlation coefficient~\cite{spearman1987proof} that are commonly used in prior OPE approaches. Definitions of these metrics are described in Appendix~\ref{appendix:experiment-detail}. Moreover, we use the human-centric significance test to measure \emph{the statistical significance} between the OPE-estimated returns across different policies. For each evaluation policy, we use bootstrapping by episodes as introduced in~\cite{hao2021bootstrapping}.
For e-learning, we also empirically evaluated the induced policies. As for many human-centric tasks, one key measurement for the RL-induced policy is whether they significantly outperform the current expert policy~\cite{zhou2020improving}. Therefore, we conduct a t-test over OPE estimations obtained from bootstrapping. It measures whether there is a significant difference between the mean value of OPE estimations on one policy against another.


\subsection{Sepsis Treatment}
Sepsis, defined as life-threatening organ dysfunction in response to infection, is the leading cause of mortality and the most expensive condition associated with in-hospital stay, accounting for more than \$24 billion in annual costs in the United States \cite{liu2014hospital}. In particular, septic shock, which is the most advanced complication of sepsis due to severe abnormalities of circulation and/or cellular metabolism \cite{bone1992definitions}, reaches a mortality rate as high as 50\% \cite{martin2003epidemiology}. 
Sepsis treatment is a highly challenging problem and has raised tremendous investigation~\cite{gao2022reconstructing,gao2022reinforcement}.

\subsubsection{Synthetic Sepsis Environment}
We use Oberst and Sontag’s sepsis model~\cite{oberst2019counterfactual} in the management of sepsis in ICU patients. Following the settings from~\cite{namkoong2020off}, the discrete observation space consists of a binary indicator for diabetes, and four vital signs (heart rate, blood pressure, oxygen concentration, glucose level) that take values in a subset of \texttt{\{very\_high, high, normal, low, very\_low\}}. The simulated environment contains a total of 1440 observations, and 8 actions characterized by assigning a binary value (0 or 1) toward each option in \texttt{\{antibiotics, vasopressors, mechanical\_ventilation\}}. The simulation continues either until at most T = 5 (horizon) time steps (0 rewards), death (-1 reward), or discharge (+1 reward). Patients are discharged when all vital signs are in the normal range without treatment. Patients die if at least three vitals are out of the normal range. We use the behavior policy and three evaluation policies following~\cite{namkoong2020off}. Figure~\ref{fig:sim-result} shows that HOPE performs the best in terms of average absolute error across evaluation policies. Detailed results are presented in Appendix~\ref{appendix:sim}.



\subsubsection{Real-World Medical System}
\label{subsubsec:mayo}
A medical system is another important domain for OPE, considering the safety of treating patients. 
In our experiment, we use Electronic Health Records (EHRs) collected from a large hospital in the United States with overall 221,700 visits patients over two years.
The observation space consists of 15 continuous sepsis-related clinical attributes, including seven vital signs \texttt{\{HeartRate, RespiratoryRate, PulseOx, SystolicBP, DiastolicBP, MAP, Temperature\}} and eight lab analytes \texttt{\{Bands, BUN, Lactate, Platelet, Creatinine, BiliRubin, WBC, FIO2\}}. The size of action space is 4 with two binary treatment options over \texttt{\{antibiotic\_administration, oxygen\_assistance\}}. Four stages of sepsis are defined by the clinicians, and the rewards are set for each stage: infection ($\pm$5), inflammation ($\pm$10), organ failure ($\pm$20), and septic shock ($\pm$50). The designated negative rewards are given when a patient enters the corresponding stage and positive rewards are given back when the patient recovers from the stage. The collected trajectories' lengths range from 1 to 1160. We assume that the clinical care team is well-trained with medical knowledge and follows standard protocols in sepsis treatments, thus we learn the expert policy as introduced in~\cite{azizsoltani2019unobserved}. We train policies using Deep Q Network (DQN)~\cite{mnih2015human} with varied hyperparameters and select five as evaluation policies as discussed in Appendix~\ref{appendix:health}. As prior work in sepsis research~\cite{komorowski2018artificial,azizsoltani2019unobserved,raghu2017deep} identifies septic shock rate as an important 
criterion for learning policies, we calculate Spearman's rank correlation coefficient between the policies' rank using estimated values given by OPE and the actual policies' rank in terms of septic shock rates. Experimental details are provided in Appendix~\ref{appendix:health}.

Figure~\ref{fig:health-result} shows the results of HOPE and benchmarks. The grey-shaded columns represent the benchmark results, and the orange-shaded columns represent the results from HOPE and its variations. Overall, HOPE performs the best in terms of rank correlation. Interestingly, we notice that IS and WIS outperform other benchmarks, while they can be suffering from long-horizon in prior theoretical work~\cite{liu2018breaking}. A possible reason is that both methods benefit more from the reduction in expected error than the variance incurred due to horizon under our real-world settings. Similar findings are reported in some long-horizon environments~\cite{fu2021benchmarks}.        


\begin{figure}[]
    \centering
    \includegraphics[width=\linewidth]{./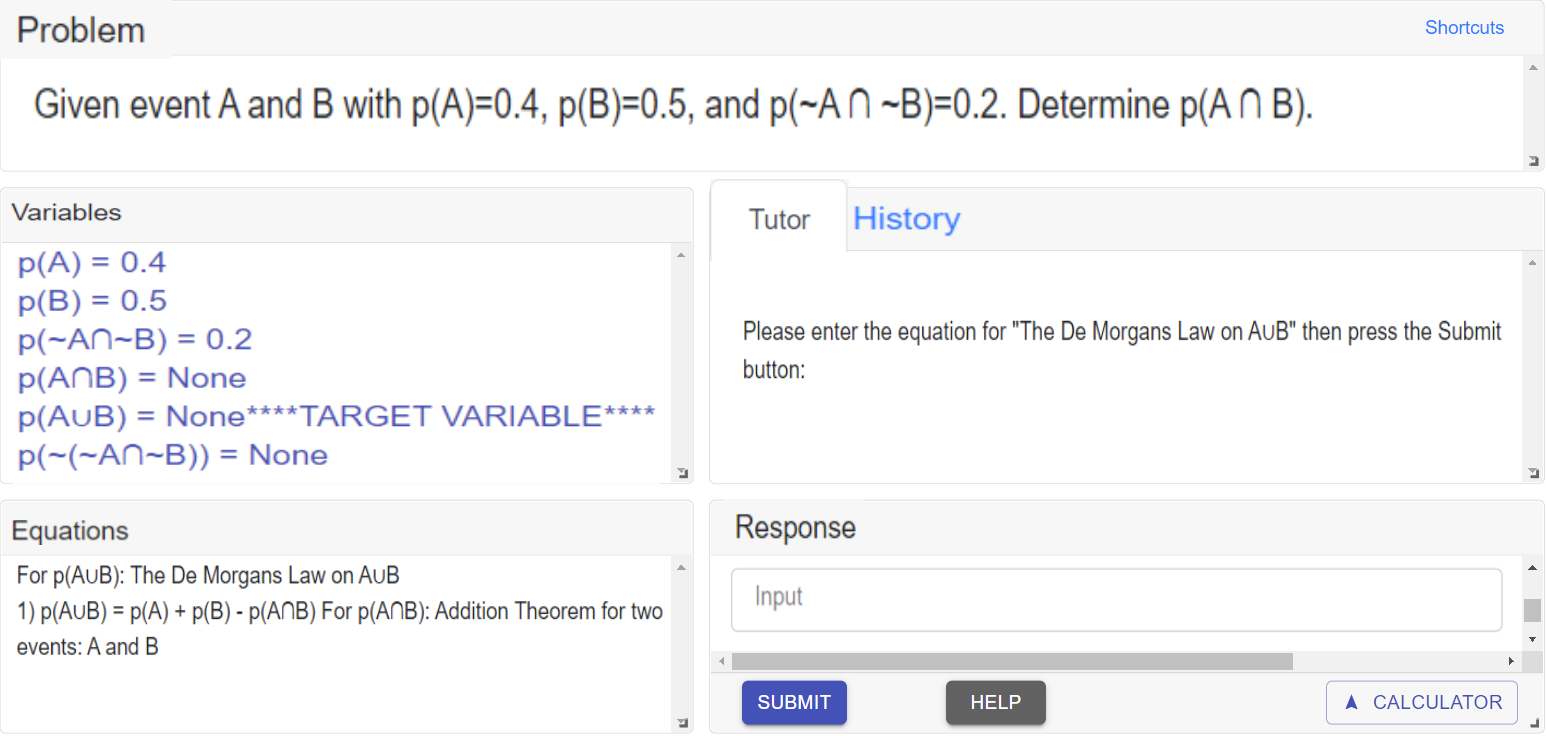}
    \caption{Our ITS GUI. The problem statement window (top) presents the statement of the problem. The dialog window (middle right) shows the message the tutor provides to the students. Responses, e.g., writing an equation, are entered in the response window (bottom right). Any variables and equations generated through this process are shown on the variable window (middle left) and equation window (bottom left).}
    \label{fig:pyrenees}
    \vspace{-10 pt}
\end{figure}

\subsection{Real-World Intelligent Tutor}

\begin{figure*}
  \centering
  \includegraphics[scale=0.35]{./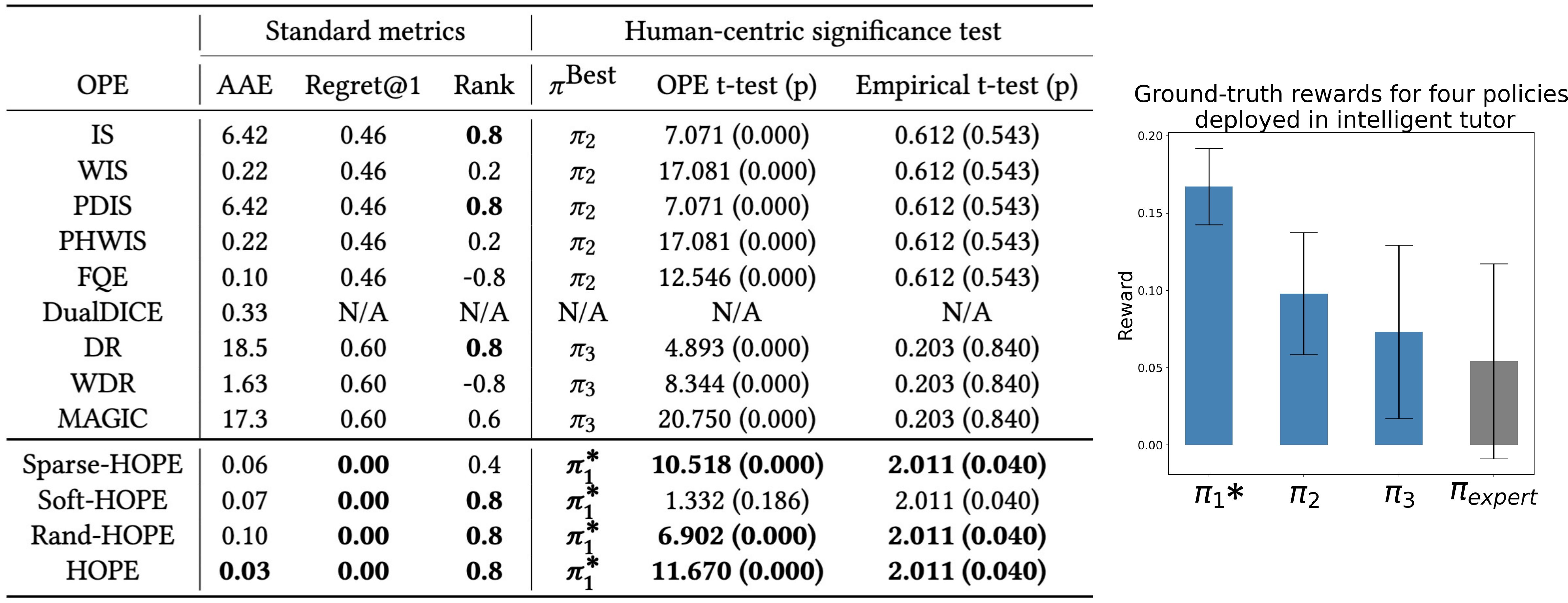}
  \caption{Validating OPE performance (left) and empirical results (right) from real-world intelligent tutoring ($\gamma=0.9$). Left: Standard metrics include average absolute error (AAE), regret@1, and Spearman's rank correlation coefficient (Rank). The best results on each metric are bolded. Human-centric significance test results include the best policy selected by OPE, offline significance test with bootstrapping on the best policy selected by OPE and expert policy, and empirical significance test on these two policies, at the level of $p<0.05$. The OPE significance test that aligns with the empirical test is bolded. DualDICE estimates the performance of all policies equally, thus its rank-related results are unavailable. Right: Ground-truth rewards for four policies. $\pi_1$ receives the highest average reward and is the only policy that differs significantly from $\pi_{expert}$ ($p=0.04$), as indicated by the asterisk.}
  \label{tab:edu-result}
\end{figure*}

Intelligent Tutoring Systems (ITSs) are computer systems that mimic aspects of human tutors and have also been shown to be successful~\cite{DBLP:journals/aiedu/VanLehn06,abdelshiheed2020metacognition,abdelshiheed2021preparing}. They aim to provide instruction or feedback to support students' learning, which is an important application of RL to improve students' engagement and learning outcomes. We use a web-based ITS which teaches computer science students probability knowledge, covering ten major principles such as the complement theorem. Students' interaction logs are collected over seven semesters of classroom studies (including 1,307 students) in an undergraduate computer science course at a large public university in the United States. Figure~\ref{fig:pyrenees} shows the GUI of the tutor.


During tutoring, there are many factors that might determine or indicate students’ learning state, but many of them are not well understood by educators. Thus, to be conservative, we extract varieties of attributes that might determine or indicate student learning observations from student-system interaction logs. In sum, 142 attributes with both discrete and continuous values are extracted, which can be categorized into the following five groups: 

(i) \textbf{Autonomy (10 features)}: the amount of work done by the student, such as the number of times the student restarted a problem; 

(ii) \textbf{Temporal Situation (29 features)}: the time-related information about the work process, such as average time per step; 

(iii) \textbf{Problem-Solving (35 features)}: information about the current problem-solving context, such as problem difficulty;

(iv) \textbf{ Performance (57 features)}: information about the student’s performance during problem-solving, such as percentage of correct entries;

(v) \textbf{Hints (11 features)}: information about the student’s hint usage,
such as the total number of hints requested.

The agent will make 10 decisions: for each problem, the agent will decide 
whether the student should \emph{solve} the next problem, \emph{study} a solution provided by the tutor, or \emph{work together} with the tutor to solve the problem. The rewards are obtained after all problems are accomplished, which is defined as the students’ normalized learning gain calculated by the two test scores that students took before and after the experiments~\cite{chi2011empirically}, respectively.
A total of four policies, including three DQN-induced policies (denoted as $\pi_1,\pi_2,\pi_3$) and one expert policy (denoted as $\pi_{expert}$), are deployed to the ITS used by students. 
The log data from students in the prior six semesters are used to train policies and the following semester to test. More details are provided in Appendix~\ref{appendix:pyrenees}.

Figure~\ref{tab:edu-result} shows the results of comparing HOPE (the last row) with the nine original OPE benchmarks (the top section of the left table) using both standard validation methods and signed significance tests for OPE. Overall, HOPE (the last row) outperforms all nine benchmarks in terms of average absolute error (AAE, column 2), regret@1 (column 3), and rank correlation (column 4). There is no clear winner among the nine original OPE benchmarks. 
Column 5 in Table~\ref{tab:edu-result} shows the best policy determined by each OPE. Among them, while all nine original OPE benchmarks select other sub-optimal policies as the best policy, HOPE is the only method that successfully identifies $\pi_1$ to be the best policy in the empirical study. More importantly, while the nine original OPE benchmarks predict that their selected best policy would significantly outperform the Expert policy (column 6), the empirical results (the 7th/last column) show no significant difference was found.  The offline significance test using HOPE, however,  perfectly aligns with the empirical result, that $\pi_1$ is significantly different from the Expert policy in both OPE t-test (column 6) and the Empirical t-test (column 7).   

\subsection{Ablation Studies}
\label{sec:ablation}
For a better understanding of our proposed approach to tackle the partial observability and missing immediate rewards in real-world human-centric environments, we conduct three ablation studies:


\textit{(i) Sparse-HOPE.} One variation of our proposed approach is assuming the preliminary rewards are sparse and calibrating immediate rewards via nearest neighbors, named Sparse-HOPE. Figure~\ref{fig:health-result} and Figure~\ref{tab:edu-result} show that Sparse-HOPE can outperform all benchmarks, except IS, in terms of rank correlation. In real-world intelligent tutors, Sparse-HOPE outperforms all benchmarks in terms of average absolute error and regret@1. Those indicate that our proposed nearest-neighbors-based immediate rewards reconstruction is effective for estimating the return of a policy. On the other hand, Sparse-HOPE performs worse than HOPE, which could indicate the importance of considering rewards as aggregated and the effectiveness of our preliminary rewards reconstruction. 

\textit{(ii) Soft-HOPE.} We define another variation of HOPE, named Soft-HOPE, which assumes decisions made on any observation could contribute equally to the final outcomes, i.e.  
\begin{equation}
     V^{\pi}(\mbox{Soft-HOPE}) = \frac{1}{\sum_{i=1}^N{w_i}} \sum_{i=1}^N{w_i} \sum_{t=1}^{T}{\gamma^{t-1} (\frac{1}{K} \sum_{k}{\mathring{r}^{(k)}_{t'}})}.
\end{equation}
Note that it performs neighbors-based estimation on all observations, as opposed to~\eqref{eqa:npr} which only estimates neighbors-based immediate rewards on critical observations. Figure~\ref{fig:health-result} shows that Soft-HOPE outperforms all benchmarks from real-world medical systems in terms of rank correlation, and from intelligent tutoring in terms of average absolute error. However, it performs worse than most benchmarks in terms of regret@1 and rank correlation from intelligent tutoring. A possible reason is using averaged rewards on all observations could introduce noise to OPE and weaken its estimation of ranking.  Moreover, it performs worse than HOPE from both environments, which indicates that our defined critical observations can help extract meaningful information for OPE.

\textit{(iii) Rand-HOPE.} The third variation of our proposed approach is randomly selecting neighbors instead of using our defined distance for immediate rewards reconstruction, which we call Rand-HOPE. We repeat Rank-HOPE 100 times and report average results. Figure~\ref{fig:health-result} and Figure~\ref{tab:edu-result} show that Rand-HOPE outperforms all benchmarks in both real-world environments. A possible reason is that inferring preliminary immediate rewards can provide much more useful information than sparse rewards, thus even randomly averaged rewards would perform better than using sparse rewards. Rand-HOPE performs worse than HOPE, which indicates that our defined distance is more accurate to reconstruct immediate rewards for OPE in e-learning and healthcare.

Moreover, in real-world intelligent tutoring, HOPE and its three variations, are the only methods that successfully select the best policy. Table~\ref{tab:hope-vs-ablation} further shows the mean and standard error on policies $\pi_1$ and $\pi_{expert}$ estimated by HOPE-related methods. HOPE achieves the best estimation that is closest to ground truth.

\begin{table}[]
    \centering
    \begin{tabular}{ccc}
        \toprule
         & $\pi_1$ Mean$_{\pm se}$ & $\pi_{expert}$ Mean$_{\pm se}$ \\
        \toprule
        Sparse-HOPE & 0.094$_{\pm 0.01}$ & 0.005$_{\pm 0.00}$ \\
        Soft-HOPE & 0.021$_{\pm 0.00}$ & 0.004$_{\pm 0.00}$ \\
        Rand-HOPE & 0.140$_{\pm 0.02}$ & -0.023$_{\pm 0.02}$ \\ 
        HOPE & \textbf{0.176}$_{\pm 0.01}$ & \textbf{0.008}$_{\pm 0.00}$ \\ \midrule
        Empirical result & 0.167$_{\pm 0.02}$ & 0.054$_{\pm 0.06}$ \\
        \bottomrule
    \end{tabular}
    \caption{Mean and standard error with bootstrapping on policies $\pi_1$ and $\pi_{expert}$ from real-world intelligent tutoring. HOPE achieves the best estimation on both policies.}
    \label{tab:hope-vs-ablation}
    \vspace{-15pt}
\end{table}

\section{Conclusion \& Social Impact}
\label{conclusion}
In this work, we proposed an approach, called \textit{HOPE}, for OPE in real-world human-centric environments with partial observability and aggregated rewards. It first inferred preliminary immediate rewards from historical observations. Then it used nearest neighbor methods to fully reconstruct immediate rewards. We also introduced critical observations, that can impact final outcomes of a trajectory over others, to enrich provided information for OPE. We conducted extensive real-world experiments with two challenging tasks for OPE, sepsis treatment, and intelligent tutoring, using both standard validation methods and human-centric significance tests to validate OPE. The results showed that HOPE outperformed state-of-the-art benchmarks in both applications. A part of our methodology leverages WIS, which may introduce variance to estimations. We kept it straightforward such that the performance can be easily isolated. In the future, WIS can be replaced with DR or DICE for reduced variance.

All educational data and EHRs were obtained anonymously through an exempt IRB-approved protocol and were scored using established rubrics. No demographic data or class grades were collected. All data were shared within the research group under IRB, and were de-identified and automatically processed for labeling. This research seeks to remove societal harms that come from lower engagement and retention of students who need more
personalized interventions and developing more robust medical interventions for patients.



\begin{acks}
This research was supported by the NSF Grants: Integrated Data-driven Technologies for Individualized Instruction in STEM Learning Environments (1726550), CAREER: Improving Adaptive Decision Making in Interactive Learning Environments (1651909), and Generalizing Data-Driven Technologies to Improve Individualized STEM Instruction by Intelligent Tutors (2013502). 
\end{acks}



\bibliographystyle{ACM-Reference-Format} 
\bibliography{ope}


\newpage
\clearpage

\appendix

\section{HOPE}

\begin{algorithm}[t]
\SetCustomAlgoRuledWidth{0.5\textwidth} 
\caption{HOPE Algorithm.}\label{alg}
\begin{algorithmic}[1]
\REQUIRE Historical data $\mathcal{D}\sim \beta$. Target policy $\pi$. Parameters $h$ and $K$.
\ENSURE
\STATE Infer $\tilde{r}$ following~\eqref{hope:loss}.
\STATE Initialize an empty set $O^*$. 
\STATE Train $Q^{\beta}(o,a)$ and obtain the set of critical observations $O^*$ given $h$.
\FOR {each $\tau^{(i)}\in \mathcal{D}$}
\FOR {each $o^{(i)}_t\in\tau^{(i)}$}
\IF{$o^{(i)}_t\in O^*$}
\STATE Find $K$ nearest neighbors of $o^{(i)}_t$ following Algorithm~\ref{alg:KNN}.
\STATE Calculate averaged immediate reward $\bar{r}^{(i)}_t$ following~\ref{eqa:knn}.
\ENDIF
\STATE Reconstruct immediate reward $\hat{r}^{(i)}_t$ following~\ref{eq:potential_reward}.
\ENDFOR
\ENDFOR
\end{algorithmic}
\end{algorithm}

In the following, the distance metric for $K$ nearest neighbors can be formulated for both discrete and continuous state spaces. For generalizability, we can use some general distance metrics, such as Kullback–Leibler (KL) divergence~\cite{kullback1951KL}, to calculate $similarity(o^{(i)},o^{(k)})$ and $similarity(a^{(i)},a^{(k)})$. For discrete state spaces $\mathcal{O}=\{o_0,\dots,o_M\}$, 
\begin{equation}
    similarity(o^{(i)},o^{(k)})=\sum_{o_m\in s^{(i)}} \Phi^{(k)}(o_m)\log (\frac{\Phi^{(k)}(o_m)}{\Phi^{(i)}(o_m)}),
\end{equation}
where $\Phi^{(k)}(o_m)$ and $\Phi^{(i)}(o_m)$ are the observation $o_m$ probability distributions on $\tau^{(k)}$ and $\tau^{(i)}$, respectively. Similarly, $similarity(a^{(i)},a^{(k)})=\sum_{a_l\in a^{(i)}} \Phi^{(k)}(a_l)\log (\frac{\Phi^{(k)}(a_l)}{\Phi^{(i)}(a_l)})$ for action space $\mathcal{A}=\{a_1,...,a_L\}$. For continuous observation spaces, we discretize them by characterizing the dependencies across features of observations. Specifically, we group the observations associated with their temporal information (i.e., the elapsed time from the start of the trajectory) into $M$ clusters, where each observation is assigned with one discrete value from the set of clusters. We choose to formulate the objective as defined in the Toeplitz inverse covariance-based clustering problem~\cite{hallac2017toeplitz} with its implementation for multi-series time-aware trajectories~\cite{yang2021mtticc}, since it can be solved through a model-based manner by considering the graphical connectivity.

\section{Experimental Details}
\label{appendix:experiment-detail}

\noindent\textbf{Standard validation metrics}

\textit{Absolute error}
The absolute error is defined as the difference between the actual value and estimated value of a policy:
\begin{equation}
    AE = |V^{\pi}-\hat{V}^{\pi}|
\end{equation}
where $V^{\pi}$ represents the actual value of the policy $\pi$, and $\hat{V}^{\pi}$ represents the estimated value of $\pi$.

\textit{Regret@1}
Regret@1 is the (normalized) difference between the value of the actual best policy, and the actual value of the best policy chosen by estimated values. It can be defined as:
\begin{equation}
    R1 = (\max_{i\in1:P} V^{\pi}_i-\max_{j\in\mbox{best}(1:P)} V^{\pi}_j)/\max_{i\in1:P} V^{\pi}_i
\end{equation}
where $\mbox{best}(1:P)$ denotes the index of the best policy over the set of $P$ policies as measured by estimated values $\hat{V}^{\pi}$.

\textit{Rank correlation}
Rank correlation measures the Spearman's rank correlation coefficient between the ordinal rankings of the estimated values and actual values across policies:
\begin{equation}
    \rho = \frac{Cov(\mbox{rank}(V^{\pi}_{1:P}), \mbox{rank}(\hat{V}^{\pi}_{1:P}))}{\sigma(\mbox{rank}(V^{\pi}_{1:P}) \sigma(\mbox{rank}(\hat{V}^{\pi}_{1:P}))}
\end{equation}
where $\mbox{rank}(V^{\pi}_{1:P})$ denotes the ordinal rankings of the actual values across policies, and $\mbox{rank}(\hat{V}^{\pi}_{1:P})$  denotes the ordinal rankings of the estimated values across policies.

\noindent\textbf{Experimental setup} We implement the proposed method in Python. All experiments are run on a machine with four NVIDIA TITAN Xp / 12GB RAM and two six-core Xeon E5-2620 $@2$ GHz CPUs, CentOS 7.8 (64-bit).

\noindent\textbf{Parameters settings.} In this work, we used $K=5$, following related neighbors-based work~\cite{gottesman2020interpretable}, for generalizability of our work.
For $Q$ used to identify critical observations, we used DQN to learn Q values which contained three hidden layers with 256 nodes on each layer. ReLU activation and 20\% dropout were applied to hidden layers. The output layer followed linear activation.
On each observation, the maximum difference of $Q(o,a),a\in \mathcal{A}$ is calculated. In simulation, since horizon was relative small compared to real-world trajectories, we assumed that each observation can be a critical observation. In real-world environments, the threshold was selected by the elbow method. An example of selecting $h$ in our real-world intelligent tutoring is provided in Figure~\ref{fig:pyrenees-h-thres}.

\begin{figure}
    \centering
    \includegraphics[width=.95\linewidth]{./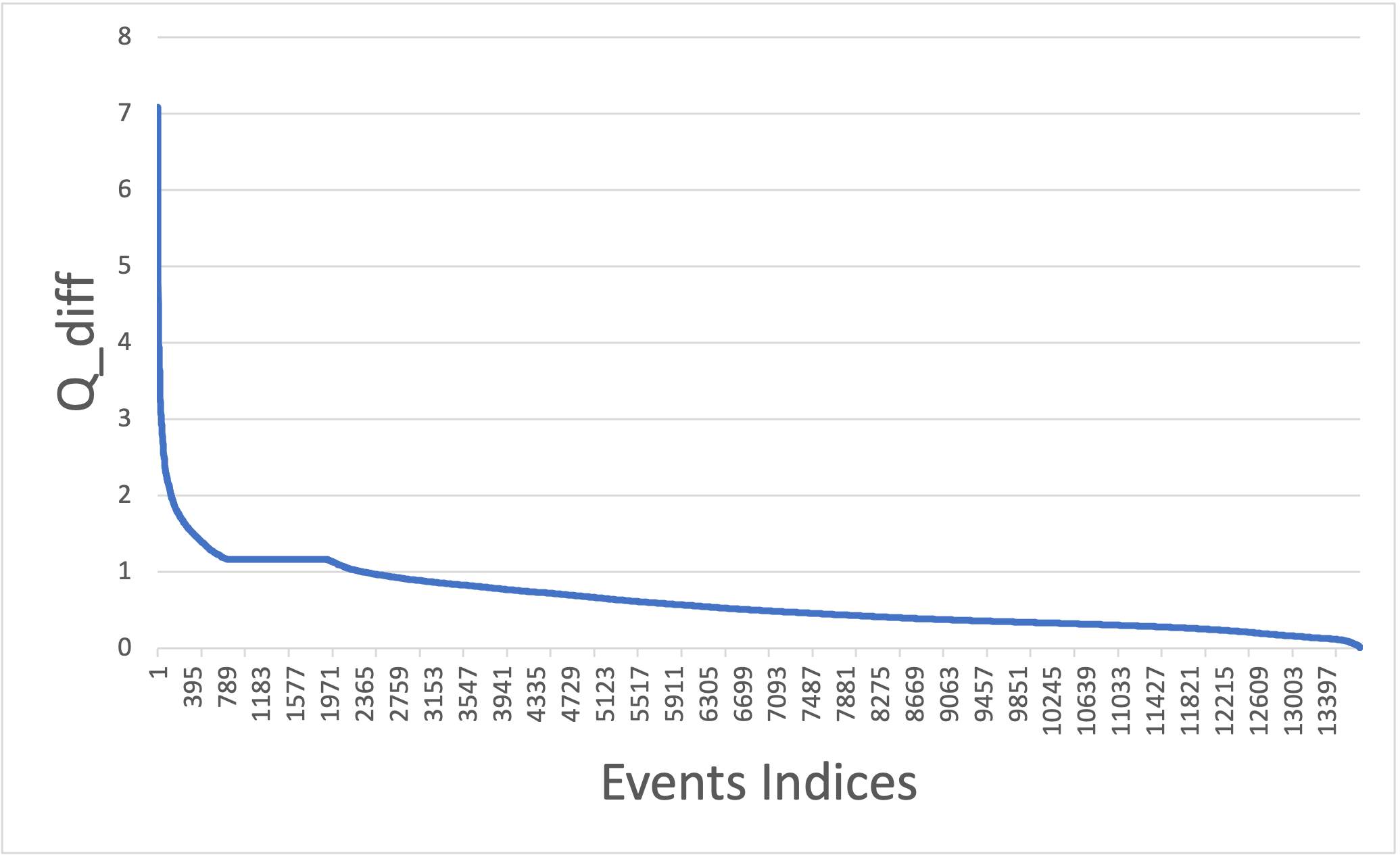}
    \caption{The maximum difference of $Q^{\pi}$ in Intelligent Tutor. We use the elbow method and select $h=1.1$.}
    \label{fig:pyrenees-h-thres}
\end{figure}

\noindent\textbf{Detailed benchmark setting}
For FQE, we used different convergence $\epsilon=\{1e-3,1e-4,1e-5,4e-3\}$ following~\cite{voloshin2019empirical} and report the one with the least average absolute error (or highest rank correlation).
For DualDICE, we used the open-sourced code in its original paper by repeating it three times with different random seeds and report the one with the least average absolute error (or highest rank correlation in real-world medical system).

\subsection{Simulation with ICU Patients}
\label{appendix:sim}

The simulator simulated 10000 trajectories. For each OPE, it estimated each policies, including \texttt{\{optimal\_policy, with\_antibiotics, without\_antibiotics\}} as in~\cite{namkoong2020off}. Here we computed the OPE estimation for evaluation policy 500 times using bootstrap, and calculated the average value of estimation as well as standard deviations.




\subsection{Real-World Medical System}
\label{appendix:health}
\subsubsection{Data Preprocessing}
\noindent\textbf{Labels.} The hospital provided the EHRs over two years, including 221,700 visits with 35 static variables such as gender, age, and past medical condition, and 43 temporal variables including vital signs, lab analytes, and treatments. Our study population is patients with a suspected infection which was identified by the administration of any type of antibiotic, antiviral, antibacterial, antiparasitic, or antifungal, or a positive test result of PCR (Point of Care Rapid). On the basis of the Third International Consensus Definitions for Sepsis and Septic Shock~\cite{singer2016third}, our medical experts identified septic shock as any of the following conditions are met: 
\begin{itemize}
\item Persistent hypertension as shown through two consecutive readings ($\le$ 30 minutes apart).
    \SubItem{Systolic Blood Pressure (SBP) $<$ 90 mmHg}
    \SubItem{Mean Arterial Pressure (MAP) $<$ 65 mmHg}
    \SubItem{Decrease in SBP $\geq$ 40 mmHg with an 8-hour period}
\item Any vasopressor administration. 
\end{itemize}
From the EHRs, 3,499 septic shock positive and 81,398 negative visits were identified based on the intersection of the expert sepsis diagnostic rules and International Codes for Disease 9th division (ICD-9); the 36,122 visits with mismatched labels between the expert rule and the ICD-9 were excluded in our study. 2,205 shock visits were obtained by excluding the visits admitted with septic shock and the long-stay visits and then we did the stratified random sampling from non-shock visits, keeping the same distribution of age, gender, ethnicity, and length of hospital stay. The final data constituted 4,410 visits with an equal ratio of shock and non-shock visits.

\noindent\textbf{Observations.} To approximate patient observations, 15 sepsis-related attributes were selected based on the sepsis diagnostic rules. In our data, the average missing rate across the
15 sepsis-related attributes was 78.6\%. We avoided deleting sparse attributes or resampling with a regular time interval because the attributes suggested by medical experts are critical to decision making for sepsis treatment, and the temporal missing patterns of EHRs also provide the information of patient observations. The missing values were imputed using Temporal Belief Memory~\cite{kim2018temporal} combined with missing indicators~\cite{lipton2016directly}. 

\noindent\textbf{Actions.} For actions, we considered two medical treatments: antibiotic administration and oxygen assistance. Note that the two treatments can be applied simultaneously, which results in a total of four actions. Generally, the treatments are mixed in discrete and continuous action spaces according to their granularity. For example, a decision of whether a certain drug is administrated is discrete, while the dosage of drug is continuous. Continuous action space has been mainly handled by policy-based RL models such as actor-critic models~\cite{lillicrap2015continuous}, and it is generally only available for online RL. Since we cannot search continuous action spaces while online interacting with actual patients, we focus on discrete actions. Moreover, in this work, the RL agent aims to let the physicians know when and which treatment should be given to a patient, rather than suggests an optimal amount of drugs or duration of oxygen control that requires more complex consideration.

\noindent\textbf{Rewards.} Two leading clinicians, both with over 20-year experience on the subject of sepsis, guided to define the reward function based on the severity of septic stages. The rewards were defined as follows: infection [-5], inflammation [-10], organ failures [-20], and septic shock [-50]. Whenever a patient was recovered from any stage of them, the positive reward for the stage was gained back. 

The data was divided into 80\% (the earlier 80\% according to the time of the first event recorded in patients' visits) for training and (the later) 20\% for test, as the most common task for OPE was using historical data to validate policies then applied selected policies for test.

\noindent\textbf{Policies}
We estimate the behavior policy with behavior cloning as in~\cite{fu2021benchmarks,hanna2019importance}.   
The evaluation policies were trained using off-policy DQN algorithm with different learning rates. 

\subsubsection{Septic Shock Rate.}
Since the RL agent cannot directly interact with patients, it only depends on offline data for both policy induction and evaluation.  In similar fashion to prior studies~\cite{komorowski2018artificial,azizsoltani2019unobserved,raghu2017deep}, the induced policies were evaluated using the septic shock rate. The assumption~\cite{raghu2017deep} behind that is: when a septic shock prevention policy is indeed effective, the more the real treatments in a patient trajectory agree with the induced policy, the lower the chance the patient would get into septic shock; vice versa, the less the real treatments in a patient trajectory agree with the induced policy (more dissimilar), the higher the chance the patient would get into septic shock. Specifically, we measured agreement rate with the agent policy, $a\in[0,1]$ was the number of events agreed with the agent policy among the total number of events in a visit; $a=0$ if the actual treatments and the agent’s recommendations are completely different in a visit trajectory, and $a=1$ if they are the same. According to the agreement rate, the average septic shock rate is calculate, which is the number of shock visits among the visits with the corresponding agreement rate $\geq a$. If the agent policies are indeed effective, the more the actually executed treatments agree with the agent policy, the less likely the patient is going to have septic shock. This metric was first used in~\cite{raghu2017deep}.



\subsection{Real-World Intelligent Tutor}
\label{appendix:pyrenees}


\subsubsection{Data Collection and Preprocessing}
Our data contains a total of 1,307 students’ interaction logs with a web-based ITS collected over seven semesters’ classroom studies. During the studies, all students used the same tutor, followed the same general procedure, studied the same training materials, and worked through the same training problems. All students went through the same four phases: 1) reading textbook, 2) pre-test, 3) working on the ITS, and 4) post-test. During reading textbook, students read a general description of each principle, reviewed examples, and solved some training problems to get familiar with the ITS. Then the students took a pre-test which contained a total of 14 single- and multiple-principle problems. Students were not given feedback on their answers, nor were they allowed to go back to earlier questions (so as the post-test). Next, students worked on the ITS, where they received the same 10 problems in the same order. After that, students took the 20-problem post-test, where 14 of the problems were isomorphic to the pre-test and the remainders were non-isomorphic multiple-principle problems. Tests were auto-graded following the same grading criteria. Test scores were normalized to the range of [0, 1].



\noindent\textbf{Rewards.}
There was no immediate reward but the empirical evaluation matrix (i.e., delayed reward), which was the students’ Normalized Learning Gain (NLG). NLG measured students' learning gain irrespective of their incoming competence. NLG is defined as:
$NLG = \frac{score_{posttest}-score_{pretest}}{\sqrt{1-score_{pretest}}}$,
where 1 denotes the maximum score for both pre- and post-test that were taken before and after usage of the ITS, respectively.

\noindent\textbf{Policies.}
The study were conducted across seven semesters, where the first six semesters' data were collected over expert policy and the seventh semester's data were collected over four different policies (three policies were RL-induced policies and one was the expert policy). The expert policy randomly picked actions. The three RL-induced policies were trained using off-policy DQN algorithm with different learning rates.

\end{document}